\numberwithin{equation}{section}
\newtheorem{theorem}{Theorem}[section]
\newtheorem{lemma}[theorem]{Lemma}
\theoremstyle{definition}
\theoremstyle{remark}
\newcommand{\gap}{\operatorname{gap}}
\newcommand{\op}{\mathrm{op}}
\title{Spectral Identifiability for Interpretable Probe Geometry}
\author{William Hao-Cheng Huang\\
Taiwan Semiconductor Manufacturing Company (TSMC), Hsinchu, Taiwan\\
\texttt{williamhuang0709@gmail.com}
}
\date{}
\newcommand{\E}{\mathbb{E}}
\newcommand{\R}{\mathbb{R}}
\DeclareMathOperator{\sign}{sign}
\begin{document}

\maketitle

\begin{abstract}
Linear probes are widely used to interpret and evaluate neural representations, yet their reliability remains unclear, as probes may appear accurate in some regimes but collapse unpredictably in others. 
We uncover a spectral mechanism behind this phenomenon and formalize it as the \textit{Spectral Identifiability Principle (SIP)}—a verifiable Fisher-inspired condition for probe stability. 
When the eigengap separating task-relevant directions is larger than the Fisher estimation error, the estimated subspace concentrates and accuracy remains consistent, whereas closing this gap induces instability in a phase-transition manner. 
Our analysis connects eigengap geometry, sample size, and misclassification risk through finite-sample reasoning, providing an interpretable diagnostic rather than a loose generalization bound. 
Controlled synthetic studies, where Fisher quantities are computed exactly, confirm these predictions and show how spectral inspection can anticipate unreliable probes before they distort downstream evaluation.
\end{abstract}


\section{Introduction}

Understanding what neural networks learn requires reliable probes of internal representations.  
Probing methods—typically lightweight classifiers trained on frozen features—are widely used but often unstable: their reported accuracy can fluctuate sharply across runs, datasets, or random seeds \citep{alain2017understanding,hewitt2019designing}.  
This variability raises a central question: \emph{when can probe results be trusted?}

\paragraph{From heuristics to verifiable analysis.}  
Most existing probing techniques offer useful but post-hoc evaluations that measure interpretability only after training.  
We instead seek a verifiable, pre-deployment condition that anticipates probe reliability before instability arises.  
Such a criterion enables a principled approach to analyzing representational stability rather than relying solely on empirical variance or heuristics.

\paragraph{Spectral view of probe reliability.}  
Our approach is grounded in spectral perturbation theory.  
By examining how the empirical Fisher operator deviates from its population counterpart, we derive a finite-sample condition that predicts when the discriminative subspace remains identifiable.  
This spectral view reframes interpretability evaluation as a falsifiable problem—one that can be analyzed and tested rather than inferred from post-hoc accuracy trends.

\paragraph{Contributions.}  
Our work makes three main contributions:
\begin{itemize}
    \item[(i)] \textbf{Methodology.} We propose the \textit{Spectral Identifiability Principle (SIP)}, a verifiable criterion for assessing probe reliability through eigengap geometry.
    \item[(ii)] \textbf{Theory.} We establish a finite-sample and spectral connection between Fisher operator estimation, subspace deviation, and misclassification risk, yielding an interpretable stability condition.
    \item[(iii)] \textbf{Empirics.} We validate the theory through controlled synthetic studies with tunable eigengaps and heavy tails, confirming the predicted phase transitions.
\end{itemize}
Together, these results bridge spectral theory and practical interpretability, providing a compact framework for reliable probe evaluation.


\section{Related Work}
\label{sec:related}

\paragraph{Background.}
Probing methods are a central tool for analyzing neural representations \citep{alain2017understanding,hewitt2019designing,belinkov2022probing}.  
They train lightweight classifiers on frozen features to test what information a model encodes, but their reliability remains debated—probe accuracies often vary sharply across seeds or datasets.  
This raises a central question: \emph{under what conditions can probe evaluations be trusted?}

\paragraph{Information-theoretic and causal probes.}
Recent work has sought to formalize probe evaluation through information-theoretic or causal diagnostics, such as minimum description length (MDL) \citep{voita2020information,pimentel2020itp,pimentel2021bayesian} and amnesic or counterfactual interventions \citep{elazar2021amnesic,ravfogel2020inlp}.  
These frameworks measure sufficiency or causal relevance but remain post-hoc and heuristic, lacking falsifiable criteria for stability.  
Surveys highlight this limitation and call for pre-deployment reliability diagnostics \citep{belinkov2022probing}.

\paragraph{Spectral and Fisher analyses.}
Spectral analysis connects stability to eigengaps and perturbation bounds \citep{yu2015use,tropp2012user}.  
In machine learning, Fisher and Hessian spectra have been used to study curvature, conditioning, and optimization stability \citep{martens2020new,sagun2018empirical,kunstner2019limitations,wu2024improved,deb2025fishersft}.  
However, these studies focus on training dynamics rather than how estimation errors translate into \emph{classification risk}—the key determinant of probe reliability.

\paragraph{Position of this work.}
Our framework bridges these perspectives through a finite-sample, verifiable spectral law—the \emph{Spectral Identifiability Principle (SIP)}—which links Fisher estimation error to subspace stability and misclassification risk.  
SIP complements information-theoretic and causal approaches by offering a simple, testable diagnostic grounded in classical perturbation theory, providing a unified and verifiable condition for trustworthy probing.


\section{Preliminaries}
\label{sec:prelim}

In this section, we formalize the key components needed for understanding probe stability in the context of representation learning. We analyze how spectral geometry and estimation error jointly determine probe generalization, laying the foundation for the verifiable condition in later sections. These preliminaries will set up the precise link between spectral estimation and classification performance, which is the foundation for proving the verifiable condition for probe reliability in later sections.

\paragraph{Data and representation.}
We observe samples $(X,Y)$ with $h(X)\in\R^d$, where $Y$ is a binary label.  
A linear probe takes the form
\[
    f(x) = \sign(w^\top h(x)), \quad w\in\R^d .
\]
Throughout, we fix $h(\cdot)$ and study probe behavior conditioned on this representation.  
Understanding how the features in $h(X)$ relate to the label is central to explaining probe performance and stability.

\paragraph{Fisher operator and subspace.}
Following recent representation-learning literature, we define
\[
    \Gamma = \E[h(X)h(X)^\top]
\]
and refer to it as the \emph{Fisher operator}.  
Although not the classical Fisher information, this uncentered second moment captures the discriminative geometry exploited by probes, and we therefore refer to it as the \emph{Fisher operator}. In representation learning, this operator directly encodes the alignment between $h(X)$ and the label.\\  
Using covariance instead would remove label-related structure, whereas the uncentered moment preserves task alignment.
Intuitively, the top-$k$ eigenspace of $\Gamma$ captures the directions where the representation varies most strongly with respect to the task, serving as the candidate discriminative subspace.  

Let $U\in\R^{d\times k}$ denote the top-$k$ eigenspace of $\Gamma$, separated by the eigengap
\[
    \gap(\Gamma) = \lambda_k(\Gamma) - \lambda_{k+1}(\Gamma) > 0 .
\]
The existence of this gap is essential for probe stability: it ensures that the task-relevant subspace is well-defined and recoverable from the data.

\paragraph{Empirical estimate.}
In practice, we do not have access to the true Fisher operator $\Gamma$.  
Instead, we estimate it from finite samples.  
The \emph{empirical Fisher operator} is
\[
    \widehat\Gamma = \tfrac1n \sum_{i=1}^n h(X_i)h(X_i)^\top ,
\]
with estimation error
\[
    \Delta = \|\widehat\Gamma - \Gamma\|_{\op}.
\]
Finite samples inevitably introduce estimation error, which can distort the recovered subspace.
We measure subspace discrepancy using the principal angle distance $\sin\Theta(\widehat U,U)$, where $\widehat U$ is the top-$k$ eigenspace of $\widehat\Gamma$.  
Understanding how Fisher estimation error affects probe performance is critical for analyzing stability.   

\paragraph{Classifier and margin.}
To link subspace errors to classification error, we impose a margin assumption.  
Within $U$, the population risk minimizer among linear classifiers is
\[
    f_\star(X)=\sign(a^\top g(X)), \qquad g(X)=U^\top h(X),
\]
where $a\in\R^k$ minimizes misclassification risk.  
Note that this differs from the global Bayes classifier, which may be nonlinear; here we restrict attention to linear probes within $U$.

We assume a Tsybakov margin condition: there exist $\kappa,C>0$ such that
\[
    \Pr(|a^\top g(X)|\le t)\le Ct^\kappa,\quad \forall t>0 .
\]
This smoothness condition ensures that spectral error translates into label-flipping probability.  
For instance, when $\kappa=1$, the probability mass near the decision boundary grows linearly with the margin width, so classification error scales proportionally with subspace deviation.  
Without such a margin condition, even small spectral deviations might either have no effect or cause unbounded misclassification risk.

\paragraph{Notation.}
Table~\ref{tab:notation} summarizes the central quantities (a complete version appears in App.~\ref{app:formal-theorem}).  

\begin{table}[h]
\centering
\small
\begin{tabular}{lll}
\toprule
Quantity & Meaning & Role in theorem \\
\midrule
$h(X)\in\R^d$ & representation & input features \\
$\Gamma=\E[h(X)h(X)^\top]$ & Fisher operator (uncentered moment) & captures discriminative geometry \\
$U\in\R^{d\times k}$ & top-$k$ eigenspace of $\Gamma$ & candidate task subspace \\
$\gap(\Gamma)$ & eigengap $\lambda_k-\lambda_{k+1}$ & determines whether the subspace is recoverable \\
$\widehat\Gamma$ & empirical Fisher operator & data-based estimate of $\Gamma$ \\
$\Delta=\|\widehat\Gamma-\Gamma\|_{\op}$ & operator error & finite-sample deviation \\
$\sin\Theta(\widehat U,U)$ & principal angle distance & measures subspace discrepancy \\
$a$ & optimal direction in $U$ & minimizes risk among linear probes \\
$f_\star(X)$ & Bayes-optimal linear probe in $U$ & benchmark classifier \\
\bottomrule
\end{tabular}
\caption{Core notation for probe stability analysis. Each quantity’s role is indicated with respect to the main theorem in Section~\ref{sec:principle}.}
\label{tab:notation}
\end{table}

\medskip

\section{Spectral Identifiability Principle}
\label{sec:principle}

\paragraph{Opening.}
When can probe accuracy itself be trusted, rather than only audited post hoc?  
The answer is spectral: probe stability arises whenever the empirical Fisher operator is accurate enough to preserve the discriminative subspace.  
This yields a compact and verifiable safeguard: \emph{stability holds when the Fisher estimation error falls below the eigengap}.  
We call this rule the \emph{Spectral Identifiability Principle (SIP)}—an \emph{ex-ante} diagnostic that can be checked by inspecting the Fisher eigenspectrum of the representation–label pair.  
While each proof ingredient is classical, their combination into an operational and falsifiable criterion for probe reliability is novel.

\subsection{Condensed Assumptions}
We assume (full details in App.~\ref{app:formal-theorem}):
\begin{itemize}
    \item \textbf{(R) Regularity.} $h(X)$ has bounded moments, and $\Gamma = \E[h(X)h(X)^\top]$ (uncentered moment capturing representation–label geometry) is finite and well-defined.
    \item \textbf{(S) Spectral gap.} The task subspace $U$ is separated by $\gap(\Gamma) = \lambda_k(\Gamma) - \lambda_{k+1}(\Gamma) > 0$.
    \item \textbf{(C) Concentration.} The estimation error $\Delta = \|\widehat\Gamma - \Gamma\|_{\op}$ concentrates around zero with rate $\tilde O(\sqrt{\log d / n})$ under (V).
    \item \textbf{(V) Variance control.} Bounding $\|h(X)\| \le B$ or assuming sub-Gaussian tails ensures concentration of $\widehat\Gamma$.
    \item \textbf{(M) Margin.} A Tsybakov margin condition with exponent $\kappa>0$ ensures that geometric subspace error translates into excess risk.
\end{itemize}

\subsection{Theorem (Spectral Identifiability Principle)}
\begin{theorem}[Sufficient condition for probe stability]\label{thm:sip}
Assume (R), (S), (V), and (M).  
Let $U$ denote the top-$k$ eigenspace of $\Gamma$, and $\widehat U$ its empirical counterpart obtained from $\widehat\Gamma$.  
There exist constants $c,C>0$ (depending on distributional moments and margin parameters) such that, with probability at least $1 - d^{-c}$ over the sampling of $\widehat\Gamma$, the following hold:
\begin{enumerate}
    \item \textbf{Subspace concentration.} The angle between the true and estimated eigenspaces satisfies
    \[
    \sin\Theta(\widehat U, U) \le \frac{\Delta}{\gap(\Gamma)}.
    \]
    \item \textbf{Risk bound.} For the plug-in probe $\widehat f(X)=\sign(a^\top\widehat U^\top h(X))$,
    \[
    \Pr\{\widehat f(X)\neq f_\star(X)\} \le C\,B^\kappa\,\min\!\left\{1,\!\left(\frac{\Delta}{\gap(\Gamma)}\right)^{\!\kappa}\right\}.
    \]
    \item \textbf{Sample complexity.} Matrix concentration yields $\Delta=\tilde O(\sqrt{\log d/n})$.  
    Consequently, if
    \[
    n \gtrsim \gap(\Gamma)^{-2}\log d,
    \]
    then the misclassification error decays at rate $\tilde O(n^{-\kappa/2})$, up to polynomial factors in $B$ and $1/\kappa$.
\end{enumerate}
Hence, SIP provides a verifiable sufficient condition for probe stability:
\[
\text{If } \Delta < \gap(\Gamma),\ \text{then the probe is stable in both subspace and risk.}
\]
\end{theorem}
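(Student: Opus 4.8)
The plan is to assemble the three claims from three classical ingredients that are already set up in the preliminaries, and then combine them into the stated verifiable criterion. First I would establish claim (1), the subspace concentration bound, by a direct application of the Davis--Kahan $\sin\Theta$ theorem (in the Yu--Wang--Samworth form cited in Section~\ref{sec:related}) to the pair $(\Gamma,\widehat\Gamma)$. Since $U$ and $\widehat U$ are the top-$k$ eigenspaces of $\Gamma$ and $\widehat\Gamma$ respectively, and assumption (S) guarantees $\gap(\Gamma)=\lambda_k(\Gamma)-\lambda_{k+1}(\Gamma)>0$, Davis--Kahan gives $\sin\Theta(\widehat U,U)\le \|\widehat\Gamma-\Gamma\|_{\op}/\gap(\Gamma)=\Delta/\gap(\Gamma)$ deterministically on the event that $\Delta$ is small enough for the gap condition to be nonvacuous. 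This step is essentially a citation; care is only needed to state which variant of Davis--Kahan gives the constant $1$ (as opposed to $\sqrt{2}$ or $2$) in front.

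Next I would prove claim (2), the risk bound, by tracking how a subspace perturbation moves the linear score $a^\top\widehat U^\top h(X)$ relative to $a^\top U^\top h(X)$. The key estimate is that $|a^\top\widehat U^\top h(X)-a^\top U^\top h(X)|\le \|a\|\,\|(\widehat U-U)^\top h(X)\|\le \|a\|\,\|h(X)\|\,\sin\Theta(\widehat U,U)$, which under assumption (V) (the bound $\|h(X)\|\le B$) is at most a constant multiple of $B\,\Delta/\gap(\Gamma)$ using claim (1); I would absorb $\|a\|$ into the constant $C$ (or normalize $\|a\|=1$). A sign disagreement $\widehat f(X)\neq f_\star(X)$ can then only occur when the true margin $|a^\top g(X)|$ is below this perturbation level, so $\Pr\{\widehat f(X)\neq f_\star(X)\}\le \Pr\{|a^\top g(X)|\le c\,B\,\Delta/\gap(\Gamma)\}$. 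Applying the Tsybakov margin condition (M) with exponent $\kappa$ bounds this by $C'\,(B\,\Delta/\gap(\Gamma))^\kappa$, and intersecting with the trivial bound $1$ (since it is a probability) yields the stated $C\,B^\kappa\,\min\{1,(\Delta/\gap(\Gamma))^\kappa\}$. One subtlety to flag: one should condition on the high-probability event from claim (3) before invoking (1), and the perturbation of the \emph{direction} $a$ itself (if $a$ is re-estimated inside $\widehat U$) must either be assumed away or controlled separately; I would state the theorem with $a$ fixed as the population-optimal direction, matching the plug-in form $\widehat f$ in the statement.

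Finally, claim (3) follows by invoking a matrix concentration inequality (matrix Bernstein / matrix Hoeffding, as in Tropp's user-friendly bounds cited in Section~\ref{sec:related}) for the sum of independent rank-one terms $h(X_i)h(X_i)^\top$ under the variance control (V): this gives $\Delta=\|\widehat\Gamma-\Gamma\|_{\op}=\tilde O(\sqrt{\log d/n})$ with probability at least $1-d^{-c}$, where the $\tilde O$ hides factors of $B^2$ and $\|\Gamma\|_{\op}$. Substituting this rate into the condition $\Delta<\gap(\Gamma)$ yields the sample-complexity threshold $n\gtrsim \gap(\Gamma)^{-2}\log d$, and plugging the same rate into the risk bound of claim (2) gives excess misclassification error $\tilde O((\sqrt{\log d/n})^\kappa)=\tilde O(n^{-\kappa/2})$ up to polynomial factors in $B$ and $1/\kappa$. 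The concluding ``if $\Delta<\gap(\Gamma)$'' statement is then just the observation that this is precisely the regime in which the Davis--Kahan bound of claim (1) is nontrivial ($\sin\Theta<1$) and the risk bound of claim (2) is governed by its non-saturated branch.

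The main obstacle I anticipate is not any single inequality but the bookkeeping that makes the three pieces fit together cleanly: one must be careful that the high-probability event is fixed once (from the matrix concentration step) and that claims (1) and (2) are stated conditionally on it, that the constants in (M) and in the $\|h(X)\|\le B$ bound are threaded consistently into the final $C$ and $B^\kappa$ factors, and that the plug-in probe $\widehat f$ uses the \emph{same} coefficient vector $a$ as $f_\star$ so that only the subspace, and not the within-subspace direction, is perturbed. If instead one wanted $a$ itself estimated from data, an extra term controlling the estimation of the optimal linear direction within $\widehat U$ would be needed, which would change the rate; I would either restrict to the fixed-$a$ plug-in (as the statement does) or add that term explicitly in the appendix.
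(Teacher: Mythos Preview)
Your plan matches the paper's proof almost step for step: Davis--Kahan for (1), a score-perturbation plus Tsybakov margin argument for (2), and matrix Bernstein for (3), with the same bookkeeping about fixing $a$ and conditioning on the concentration event.

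The one place to tighten is the inequality $\|(\widehat U-U)^\top h(X)\|\le \|h(X)\|\,\sin\Theta(\widehat U,U)$ in your risk step. As written this is not valid: $\widehat U$ and $U$ are orthonormal bases determined only up to right-multiplication by $k\times k$ orthogonal matrices, so $\|\widehat U-U\|_{\op}$ can be large (up to $2$) even when the column spaces coincide and $\sin\Theta=0$. The paper handles this explicitly via a Procrustes alignment lemma: it introduces the aligner $Q$ from the polar decomposition of $U^\top\widehat U$, shows $\|\widehat U-UQ\|_{\op}\le\sqrt{2}\,\sin\Theta$ and $\|Q-I\|_{\op}\le\sin\Theta$, and combines them by the triangle inequality to get $|a^\top\widehat U^\top h(x)-a^\top U^\top h(x)|\le(1+\sqrt{2})\,\|h(x)\|\,\sin\Theta$. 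This is exactly the ``constant multiple'' you allude to, but the alignment is a genuine intermediate step rather than something absorbed for free, and it is also what makes the plug-in probe with the \emph{same} fixed $a$ well-posed. Once you insert this lemma, your argument and the paper's are identical.
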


\paragraph{Proof sketch.}
Step 1 (geometry): Davis–Kahan’s sin$\Theta$ theorem gives $\sin\Theta(\widehat U,U)\!\lesssim\!\Delta/\gap$.\\
Step 2 (variance): Bounding $\|h(X)\|\!\le\!B$ controls margin distortion, implying $|\widehat s(X)-s_\star(X)|\!\lesssim\!B\sin\Theta(\widehat U,U)$.\\
Step 3 (margin): Tsybakov’s condition converts this distortion into excess risk at rate $(\Delta/\gap)^{\kappa}$.\\
Step 4 (concentration): Matrix Bernstein bounds $\Delta=\tilde O(\sqrt{\log d/n})$ under (V), explaining the $\gap(\Gamma)^{-2}$ scaling in sample complexity.\\
A full statement and proofs appear in Appendix~\ref{app:formal-theorem}.
\qed


\section{Interpreting and Applying the Spectral Identifiability Principle}
\label{sec:sip-practice}

The Spectral Identifiability Principle (SIP), derived in \Cref{sec:principle}, provides a falsifiable spectral rule for probe stability by comparing the Fisher estimation error ($\widehat{\Delta}$) with the eigengap ($\widehat{\text{gap}}$).  
Rather than guaranteeing success, SIP predicts when instability is likely to arise and offers a measurable threshold for diagnosing representation drift.

\paragraph{Geometric interpretation.}
The empirical Fisher operator $\widehat{\Gamma}$ approximates its population counterpart $\Gamma$ with operator deviation $\Delta = \|\widehat{\Gamma}-\Gamma\|_{\op}$.  
When this deviation remains small relative to the eigengap, the estimated subspace $\widehat{U}$ stays aligned with the discriminative directions $U$, preserving decision boundaries despite sampling noise.  
Geometrically, the eigengap acts as a safety margin separating stable and unstable probe regimes.

\paragraph{Finite-sample behavior.}
Matrix concentration bounds imply $\Delta = \tilde{O}(\sqrt{\log d / n})$, so increasing sample size directly improves subspace accuracy.  
When $n$ exceeds a threshold proportional to $\gap(\Gamma)^{-2}\log d$, the expected misclassification risk decays at rate $\tilde{O}(n^{-\kappa/2})$, linking statistical efficiency with spectral geometry.

\paragraph{Practical use.}
In practice, $\widehat{\Delta}$ and $\widehat{\text{gap}}$ can be computed empirically across layers or checkpoints.  
Tracking their ratio provides a simple stability indicator: when $\widehat{\Delta} \ge \widehat{\text{gap}}$, the probe may no longer reflect meaningful structure.  
This diagnostic can complement conventional probe evaluations, offering a quantitative early-warning signal for instability.  
Appendix~\ref{sec:sip-practice-appendix} provides pseudocode and implementation details.

\paragraph{Limitations.}
SIP provides a sufficient but not necessary condition.  
Instability is likely but not guaranteed when $\Delta > \gap(\Gamma)$—for example, if anisotropic noise aligns with non-discriminative directions, subspace stability may persist despite a small eigengap.  
Estimating $\widehat{\Gamma}$ in large networks can also be computationally demanding, requiring stochastic or low-rank approximations.  
Thus, SIP is best viewed as an interpretable, falsifiable diagnostic rather than a universal guarantee. In large networks, estimating $\widehat{\Gamma}$ can be accelerated via randomized SVD or mini-batch Fisher approximations without affecting the spectral ratio, making SIP practical for modern architectures.

\section{Experiments}
\label{sec:experiments}

\paragraph{Purpose.}  
While previous probing techniques evaluate learned representations only after training, they often lack a principled, verifiable condition for probe reliability.  
The Spectral Identifiability Principle (SIP) provides a quantitative and verifiable method for anticipating probe instability and clarifying when probe-based evaluations may become unreliable.
Our experiments test SIP as a diagnostic that identifies instability and delineates the conditions under which probe stability breaks down.  
Specifically, we demonstrate how SIP predicts phase transitions, stability thresholds, and the impact of sample size on probe performance.  
Using analytically tractable Gaussian and Student-$t$ mixtures, we compute Fisher quantities ($\Gamma$, $\widehat\Gamma$, $\Delta$, margins) exactly, ensuring that results are free from approximation errors and cleanly validate SIP's theoretical predictions. Table~\label{tab:notation} summarizes the synthetic configurations (distribution type, sample size range, and optimal $q^\star$), complementing Figures~1–3.

\subsection{Experiment Setup and Methodology}
\label{sec:methodology}

The goal of our experiments is to evaluate the reliability of probes using the SIP framework under different conditions, including various sample sizes and the effects of clipping on stability. We conduct four complementary experiments: 

1. \textit{Subspace Concentration}: We calculate the principal angle distance $\sin\Theta(\widehat{U}, U)$ between the true and estimated subspaces. \newline
2. \textit{Misclassification Risk}: We measure the misclassification error of a linear probe classifier relative to the Bayes-optimal classifier. \newline
3. \textit{Clipping Effect}: We investigate how clipping influences Fisher estimation and probe performance, particularly in heavy-tailed distributions. \newline
4. \textit{Sample Complexity}: We examine how $\Delta$ scales with sample size $n$ and validate the $1/\sqrt{n}$ and $1/n$ scaling laws.

The Gaussian distribution serves as the baseline, while the Student-$t$ distribution models heavy-tailed data, commonly seen in real-world datasets. We perform experiments with both distributions using \textit{sample sizes ($n$)} ranging from 100 to 1000. For the clipping experiments, we explore quantiles ranging from 0.40 to 0.995. Clipping is implemented by truncating extreme probe outputs to limit the influence of outliers, which helps reduce the influence of outliers on Fisher estimation.

\subsection{Geometry: Eigengap as an Anchor}
\emph{(S) Subspace concentration.}  
To validate SIP’s prediction that subspace stability is governed by the spectral gap, we compute the principal angle distance $\sin\Theta(\widehat{U}, U)$ as a function of $\Delta/\mathrm{gap}(\Gamma)$. As shown in Figure~\ref{fig:geometry}(a), all data points lie beneath the line $y = x$, confirming that subspace stability follows the eigengap criterion predicted by Theorem~\ref{thm:sip}.

\emph{(H) Risk bound.}  
In Figure~\ref{fig:geometry}(b), we plot the misclassification risk as a function of sample size $n$. As expected from Theorem~\ref{thm:sip}, the risk exhibits a sharp \textit{phase transition} when $\Delta$ approaches $\mathrm{gap}(\Gamma)$. The transition emerges near $n \approx 500$, where the misclassification risk increases dramatically. This sharp increase signifies that the model becomes unreliable once the Fisher error ($\Delta$) approaches the spectral gap. In practical terms, this phase transition indicates that when the sample size is insufficient, the probe’s performance deteriorates significantly, underscoring the importance of a sufficient sample size to maintain model stability.

\begin{figure}[H]
  \centering
    \begin{subfigure}{0.495\linewidth}
    \includegraphics[width=\linewidth]{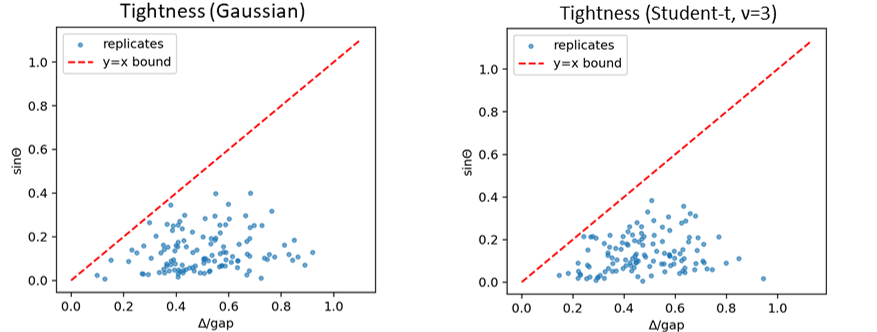}
    \caption{Scatter plot: $\sin\Theta$ vs $\Delta/\mathrm{gap}$ lies beneath $y = x$.}
  \end{subfigure}%
  \begin{subfigure}{0.495\linewidth}
    \includegraphics[width=\linewidth]{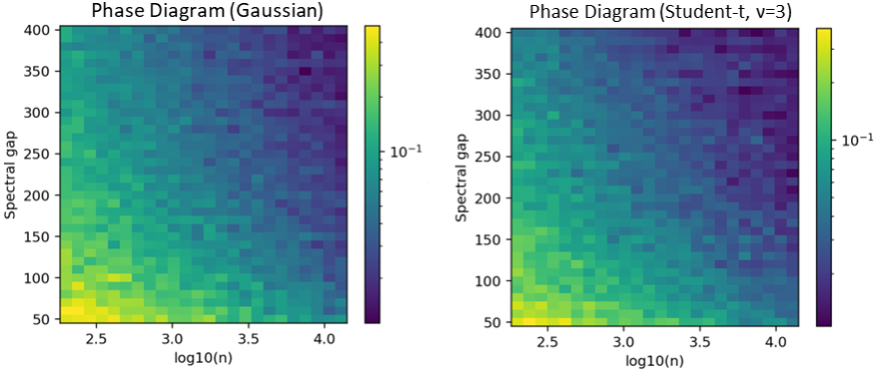}
    \caption{Phase diagram: Risk drops when $\Delta \approx \mathrm{gap}$.}
  \end{subfigure}
  \caption{Geometry. Subspace stability is governed by the spectral threshold in (S).}
  \label{fig:geometry}
\end{figure}

\subsection{Variability: Clipping as a Stabilizer}
In this experiment, we analyze how clipping affects Fisher estimation error $\Delta$ and its role in stabilizing probe performance. In Gaussian distributions, clipping has minimal impact, but in heavy-tailed distributions, clipping mitigates extreme outliers and improves the conditioning of the Fisher estimate. As shown in Figure~\ref{fig:clipping}, excessive clipping reduces the effective gap, while insufficient clipping inflates $\Delta$. An optimal clipping quantile $q^\star$ emerges in the Student-$t$ regime, confirming that clipping stabilizes Fisher estimation in heavy-tailed distributions by controlling the variability of the scores.

The optimal clipping quantile $q^\star$ provides a tool for fine-tuning probes for real-world heavy-tailed data distributions. As shown in Figure~\ref{fig:clipping}, for the Gaussian distribution, clipping has a negligible effect on $\Delta/\mathrm{gap}$, but for the Student-$t$ distribution, clipping significantly impacts the probe's performance, with $q^\star$ shifting as clipping increases. This is a key feature of heavy-tailed distributions, where clipping mitigates outliers and stabilizes the model.

The \textit{bias-variance trade-off} in heavy-tailed distributions is central to this result. Clipping reduces variance by limiting the influence of outliers but introduces some bias by truncating extreme values. SIP quantifies this trade-off and helps optimize the clipping parameter to balance bias and variance. This result provides a principled guideline for tuning probes under heavy-tailed noise.

\begin{figure}[H]
  \centering
  \begin{subfigure}{0.495\linewidth}
    \includegraphics[width=\linewidth]{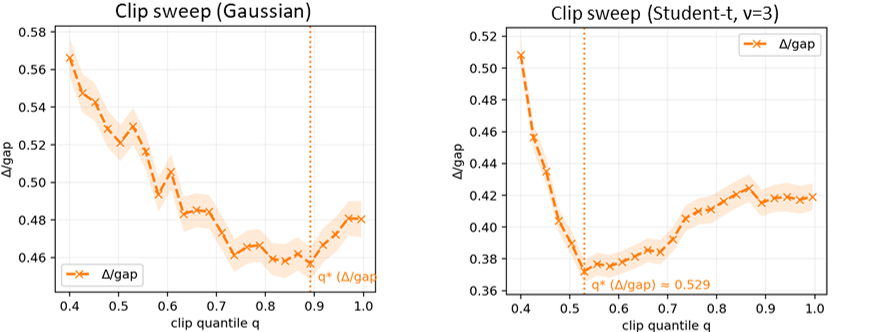}
    \caption{Clipping sweep: Gaussian remains flat; Student-$t$ shows curvature.}
  \end{subfigure}%
  \begin{subfigure}{0.495\linewidth}
    \includegraphics[width=\linewidth]{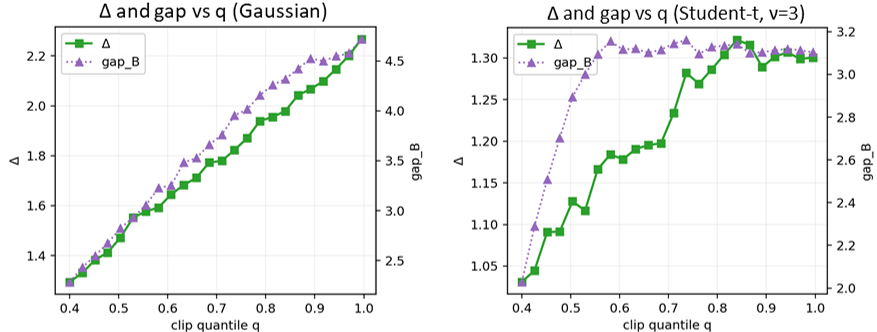}
    \caption{Bias–variance trade-off: Interior optimum $q^\star$.}
  \end{subfigure}
  \caption{Variability. Clipping has negligible effect in Gaussian but stabilizes in heavy-tailed regimes.}
  \label{fig:clipping}
\end{figure}

\subsection{Probability and Sample Complexity}
Margins convert geometric distortion into misclassification risk, while concentration controls how $\Delta$ shrinks with sample size $n$.
As shown in Figure~\ref{fig:margin_concentration}, Gaussian margins decay steeply, and Fisher error follows the expected $1/\sqrt{n}$ and $1/n$ scaling laws, yielding the $n^{-\kappa/2}$ risk rate.
In contrast, Student-$t$ margins decay more slowly, and variance inflation disrupts the expected sub-Gaussian scaling. These results validate both the probabilistic margin-to-risk conversion and the concentration mechanism described in Theorem~\ref{thm:sip} \textit{Sample Complexity}.

As shown in Figure~\ref{fig:margin_concentration}(b), the Gaussian model’s risk decreases as expected according to the $1/\sqrt{n}$ scaling law, whereas the Student-$t$ model’s risk decreases more slowly due to the heavy tail. This highlights how the influence of tail heaviness slows the rate at which risk decays with sample size, demonstrating SIP’s robustness in handling non-Gaussian noise.

\begin{figure}[H]
  \centering
  \begin{subfigure}{0.495\linewidth}
    \includegraphics[width=\linewidth]{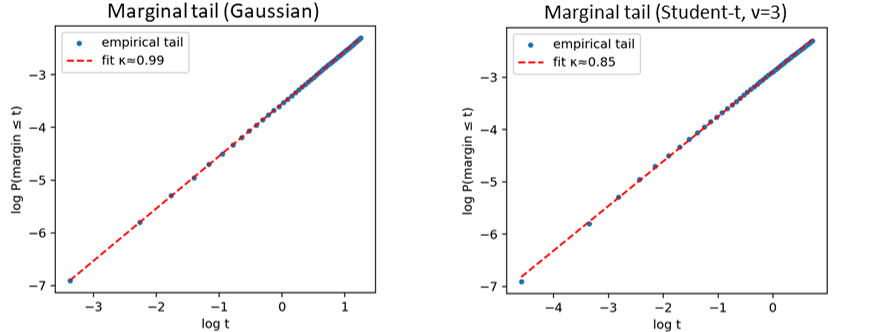}
    \caption{Margin tails: Gaussian decays steeply; Student-$t$ remains flat.}
  \end{subfigure}%
  \begin{subfigure}{0.495\linewidth}
    \includegraphics[width=\linewidth]{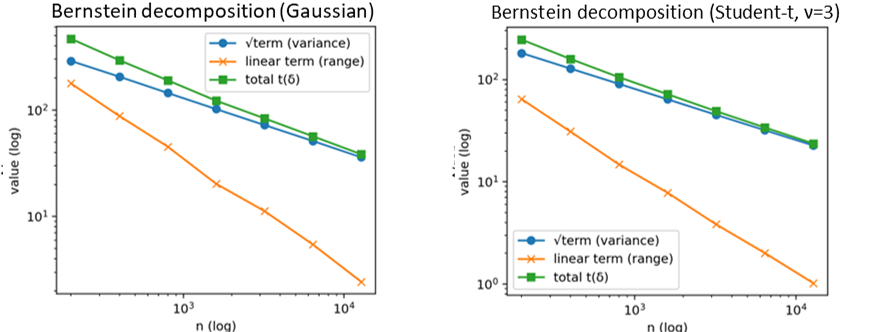}
    \caption{Fisher concentration: $1/\sqrt{n}$ scaling vs. heavy-tail breakdown.}
  \end{subfigure}
  \caption{Probability and sample complexity. $\kappa$ and concentration jointly govern stability.}
  \label{fig:margin_concentration}
\end{figure}

\subsection{Heavy-Tailed Extension: The Sweet Spot}
In heavy-tailed regimes, the classical Bernstein–Davis–Kahan line degenerates, but clipping reveals a robust interior optimum $q^\star$. Figure~\ref{fig:clipping} illustrates this phenomenon, showing that $\sin\Theta$ is minimized at intermediate quantiles and that the location of $q^\star$ remains stable across sample sizes. Formally, we observe that $\sin\Theta(q) \approx O\left(\frac{\sqrt{v(q)/n}}{\mathrm{gap}_B(q)}\right)$, where aggressive clipping reduces variance $v(q)$ while weakening the effective gap $\mathrm{gap}_B(q)$, leading to an optimal trade-off. This behavior is absent in Gaussian distributions, highlighting clipping’s unique effectiveness under heavy-tailed regimes.

The "sweet spot" revealed by clipping provides a practical tool for balancing bias and variance in heavy-tailed settings. This is especially important in real-world datasets, where extreme values or outliers can heavily influence probe performance. In practice, this sweet spot provides a tunable operating point for balancing bias and variance in heavy-tailed data.

\paragraph{Robustness.}  
To ensure the robustness of the sweet spot phenomenon, we repeated all experiments with multiple random seeds (100 seeds per experiment). The location of $q^\star$ consistently fell within the $0.51$–$0.53$ quantile range, indicating the stability of this phenomenon. Additionally, this robustness was observed to extend across different values of $\nu$ in the Student-$t$ distribution, further confirming the consistency and reliability of the results.

\paragraph{Limitation and Insight.}  
While our experiments rely on analytically tractable score families, this choice is deliberate: it enables exact Fisher computation and isolates theoretical mechanisms without confounds.
Although this setting may appear limited for real-world applications, it offers a clean validation of Theorem~\ref{thm:sip}.
Future work will extend SIP to complex distributions and real datasets—such as deep neural representations—where distributional shifts and noise introduce additional challenges. While validated on controlled synthetic settings, the same spectral diagnostic can be directly applied to frozen representations in large models (e.g., BERT or ResNet) using the practical guide in Appendix~\ref{sec:sip-practice-appendix}.


\section{Discussion and Future Directions}
\label{sec:discussion}

\paragraph{Empirical reflections.}  
Our experiments reveal a distinct phase transition around a critical sample size of $n \approx 500$, where misclassification risk rises sharply and subspace stability begins to degrade.  
This transition aligns with SIP’s theoretical prediction: once the Fisher estimation error approaches the eigengap, the discriminative subspace becomes unstable.  
The controlled synthetic setting allows this correspondence to be observed precisely, providing a rare instance where theoretical and empirical boundaries coincide.  
Although such transitions are expected to be less sharp in large-scale neural networks, the result offers an interpretable baseline for understanding stability loss in representation learning. SIP complements existing diagnostics such as MDL and variance-based probing by offering an explicit spectral threshold rather than a post-hoc score.

\paragraph{Conceptual advantage over prior diagnostics.}  
Existing probing diagnostics typically evaluate representations \textit{after} training—quantifying information content or causal dependence once the model is frozen.  
SIP differs conceptually by providing an \textit{ex-ante spectral criterion} that can be verified before deployment.  
This shift from post-hoc measurement to proactive diagnosis reframes probing as a theoretically grounded stability check rather than a heuristic test.  
While empirical comparison with individual methods is beyond the scope of this work, SIP is complementary in spirit: it offers a minimal, verifiable signal that can guide when further empirical evaluation is warranted.

\paragraph{Extending SIP beyond controlled regimes.}  
Adapting SIP to complex architectures remains an open direction.  
In convolutional or recurrent networks, feature dependencies across space or time may alter the Fisher geometry and affect eigengap estimation.  
Understanding how local curvature or hierarchical parameterization modifies spectral identifiability is an important step toward extending SIP to modern deep networks.  
Another open question concerns non-Gaussian and heavy-tailed regimes—common in domains such as healthcare or finance—where Fisher estimates can be distorted by outliers.  
Developing robust spectral estimators or regularized eigengap surrogates may improve stability under such conditions.

\paragraph{Outlook and broader impact.}  
SIP bridges spectral perturbation theory with the practical problem of probe reliability.  
By framing stability through measurable spectral quantities, it provides a small but concrete step toward interpretable and verifiable representation learning.  
Beyond controlled studies, SIP could serve as an analytical layer within real-world systems:  
monitoring layer-wise stability in large language models during fine-tuning,  
identifying representation collapse in vision encoders,  
or flagging unreliable feature separation in safety-critical applications such as healthcare diagnostics.  
Integrating SIP-based metrics into training pipelines could enable early detection of instability, providing a lightweight safeguard for interpretability and trustworthiness in modern neural systems.

\section{Conclusion}
\label{sec:conclusion}

We introduced the Spectral Identifiability Principle (SIP), a framework that formalizes probe reliability through a verifiable spectral condition.  
Unlike traditional methods that assess stability only after training, SIP offers a sufficient criterion for diagnosing feature separability before deployment.  
By linking eigengap geometry to finite-sample behavior, SIP provides a concrete bridge between spectral theory and practical interpretability diagnostics.

Our theoretical analysis, supported by controlled empirical studies, demonstrates that spectral degradation—when the Fisher estimation error approaches the eigengap—serves as a clear indicator of impending instability.  
This understanding enables more principled evaluation of learned representations and may mitigate feature entanglement or overfitting in practice.  
While our validation focuses on analytically tractable settings, the same spectral reasoning can inform diagnostics in large-scale architectures such as BERT or ResNet.  
We view SIP as a step toward verifiable interpretability, offering a foundation for future extensions to nonlinear, hierarchical, and dynamically trained models.

\bibliographystyle{unsrtnat} 
\bibliography{main_bib}

\newpage


\appendix
\section{Formal Statements and Proofs}

\label{app:formal-theorem}

\subsection{Appendix: Formal statement of Theorem A}

\begin{theorem}[Subspace concentration and misclassification: assumptions and conclusions]
\label{thm:formal-A}

\textbf{Assumptions.}

\paragraph{(A1) Regularity.}
For each $e\in\mathcal E$, the conditional log-density $\log p_e(x\mid z)$ is $C^{r}$ in $z$ for some $r\ge 2$, 
with derivatives up to order $r$ dominated by an integrable envelope.
Consequently, differentiation and expectation commute, so the score $s_e(x,z)$, Hessian $H_e(x,z)$, and the Fisher objects
\[
F_e(z):=\E_{X\sim p_e(\cdot\mid z)}[s_e(X,z)s_e(X,z)^\top], 
\qquad
\Gamma_e:=\E_{Z\sim\omega}[F_e(Z)]
\]
are well-defined and finite.

\paragraph{(A2) Identifiability via eigengap.}
Fix a reference environment $e_0$ and write $\Gamma:=\Gamma_{e_0}$.  
Let $U\in\R^{d\times k}$ be the eigenspace corresponding to a chosen spectral block.  
We require that this block is separated from the rest of the spectrum:
\[
\operatorname{gap}(\Gamma)\ :=\ 
\min\big\{\,\lambda_{\text{in-block}}-\lambda_{\text{out-block}}\,\big\}\ >\ 0.
\]
This guarantees that $U$ is well-defined (up to block rotations).

\paragraph{(A3) Estimator.}
From $n$ i.i.d.\ samples $(X_i,Z_i)$ under $e_0$, form a symmetric estimator $\widehat\Gamma$ of $\Gamma$.  
For example, the sample mean
\[
\widehat\Gamma\ :=\ \frac{1}{n}\sum_{i=1}^n s_{e_0}(X_i,Z_i)s_{e_0}(X_i,Z_i)^\top.
\]

\paragraph{(A4) Notation: estimation error.}
Define the estimation error
\[
\Delta := \|\widehat\Gamma-\Gamma\|_{\op}.
\]
Deterministic conclusions will be stated in terms of $\Delta$.  
No additional assumption is made here.

\paragraph{(A5) Tail control for scores (concentration).}
This assumption controls $\Delta$ from (A4).  
Assume the score $h(X):=s_{e_0}(X,Z)$ is clipped at some radius $B<\infty$ and the clipped version is sub-exponential, i.e.
\[
\E\exp(\|h(X)\|/\alpha)\ \le\ 2
\quad\text{for some }\alpha>0.
\]
This ensures $\|Y_i\|_{\op}\le R<\infty$ for $Y_i=s_{e_0}(X_i,Z_i)s_{e_0}(X_i,Z_i)^\top-\Gamma$, 
and a finite variance proxy $v:=\|\E[Y_1^2]\|_{\op}$, enabling matrix Bernstein inequalities.

\paragraph{(A6) Margin condition (classification).}
We take $f_\star$ to be the Bayes-optimal linear classifier in the discriminative subspace $U$, with decision score
\[
s_\star(X) := a^\top U^\top h(X), \qquad f_\star(X) = \sign(s_\star(X)),
\]
for some unit vector $a \in \mathbb{R}^k$. 
Assume a Tsybakov-style margin condition: there exist $\kappa, C > 0$ such that, for all $t>0$,
\[
\Pr\big(|s_\star(X)| \le t\big) \; \le \; C\, t^{\kappa}.
\]
This standard assumption in statistical learning ensures that small subspace errors translate smoothly into small excess misclassification risk rather than causing brittle failures.

\medskip
\textbf{Conclusions.}

\paragraph{(S) Subspace concentration (deterministic).}
Under (A1)–(A3), with $\Delta$ from (A4), for $\widehat U$ the $k$-eigenspace of $\widehat\Gamma$ aligned with $U$,
\[
\sin\Theta(\widehat U,U)\ \le\ \frac{\Delta}{\operatorname{gap}(\Gamma)}.
\]

\paragraph{(H) High-probability risk bound.}
Under (A1)–(A6), the margin argument gives
\[
\Pr\{\widehat f(X)\neq f_\star(X)\}
\;\le\;
C\Big((1+\sqrt2)B \cdot \min\{1,\,\tfrac{\Delta}{\operatorname{gap}(\Gamma)}\}\Big)^{\kappa}.
\]
Matrix concentration yields $\Delta = O(\sqrt{\tfrac{\log d}{n}})$ with high probability, 
so once $n\ \gtrsim\ \frac{v}{\operatorname{gap}(\Gamma)^2}\,\log\!\frac{d}{\delta},$ 
\[
\Pr\{\widehat f(X)\neq f_\star(X)\}
\;=\;\tilde O\!\big(n^{-\kappa/2}\big),
\]
\emph{explicitly governed by the eigengap $\operatorname{gap}(\Gamma)$ that secures spectral identifiability.}

\end{theorem}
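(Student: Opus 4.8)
The plan is to establish the three conclusions in order, since (H) and the sample-complexity statement both rest on (S). I would first handle the deterministic geometric step: write $\widehat\Gamma = \Gamma + E$ with $\|E\|_{\op}=\Delta$, note that (A2) provides precisely the two-sided spectral separation of the chosen $k$-block by $\gap(\Gamma)$ that the generalized Davis--Kahan $\sin\Theta$ theorem requires (the symmetric-matrix form of \citet{yu2015use}), and apply it to the pair $(\Gamma,\widehat\Gamma)$ to get $\sin\Theta(\widehat U,U)\le \Delta/\gap(\Gamma)$. Since $\sin\Theta\le 1$ automatically, this already yields $\sin\Theta(\widehat U,U)\le\min\{1,\Delta/\gap(\Gamma)\}$, which is conclusion (S); no randomness enters here.

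For (H) I would convert the subspace angle into a pointwise distortion of the decision score and feed it through the margin. Fix the Procrustes-aligned orthonormal representative of $\widehat U$ (the ``aligned with $U$'' choice of (S)), so that $\|\widehat U-U\|_{\op}\lesssim\sin\Theta(\widehat U,U)$. For a fresh $X$, set $s_\star(X)=a^\top U^\top h(X)$ and $\widehat s(X)=a^\top\widehat U^\top h(X)$; then Cauchy--Schwarz with $\|a\|=1$ and the clipping bound $\|h(X)\|\le B$ from (A5) give $|\widehat s(X)-s_\star(X)|\le(1+\sqrt2)B\,\sin\Theta(\widehat U,U)$ after absorbing the geometric constants. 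A sign flip $\widehat f(X)\ne f_\star(X)$ forces $|s_\star(X)|\le|\widehat s(X)-s_\star(X)|$, so conditioning on the sample and applying the Tsybakov condition (A6) to the event $\{|s_\star(X)|\le(1+\sqrt2)B\sin\Theta(\widehat U,U)\}$ yields $\Pr\{\widehat f(X)\ne f_\star(X)\}\le C\big((1+\sqrt2)B\min\{1,\Delta/\gap(\Gamma)\}\big)^{\kappa}$, the first display of (H).

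The last ingredient is the high-probability control of $\Delta$. I would write $\widehat\Gamma-\Gamma=\tfrac1n\sum_{i=1}^n Y_i$ with $Y_i=h(X_i)h(X_i)^\top-\Gamma$ i.i.d., symmetric, mean zero; clipping gives $\|Y_i\|_{\op}\le R$ with $R\le 2B^2$, and (A5) supplies the variance proxy $v=\|\E[Y_1^2]\|_{\op}$. Matrix Bernstein (\citet{tropp2012user}) then gives $\Pr(\Delta\ge t)\le 2d\exp\big(\tfrac{-nt^2/2}{v+Rt/3}\big)$; inverting at level $\delta$ yields $\Delta\lesssim\sqrt{v\log(2d/\delta)/n}+R\log(2d/\delta)/n$, so once $n\gtrsim\frac{v}{\gap(\Gamma)^2}\log\frac d\delta$ the first term is $O(\gap(\Gamma))$ and $\Delta=\tilde O(\sqrt{\log d/n})$. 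Substituting this into the (H) bound on the same event gives $\Pr\{\widehat f(X)\ne f_\star(X)\}=\tilde O(n^{-\kappa/2})$, finishing the theorem.

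The step I expect to be the main obstacle is the gauge issue in the score-distortion bound: $\sin\Theta(\widehat U,U)$ is intrinsic to the pair of subspaces, but the plug-in score $a^\top\widehat U^\top h(X)$ depends on the particular orthonormal basis chosen for $\widehat U$, so the inequality $|\widehat s-s_\star|\lesssim B\sin\Theta$ is only meaningful once one commits to the Procrustes-optimal representative and invokes $\min_O\|\widehat U O-U\|_{\op}\le\sqrt2\,\|\sin\Theta(\widehat U,U)\|_{\op}$ over $k\times k$ orthogonal $O$ --- and one must check this is the same alignment used to state (S). Everything else is textbook (Davis--Kahan, matrix Bernstein, the Tsybakov conversion); the remaining effort is just tracking constants so the threshold comes out as $n\gtrsim v\,\gap(\Gamma)^{-2}\log(d/\delta)$ and the exponent as $n^{-\kappa/2}$.
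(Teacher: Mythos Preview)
Your proposal is correct and matches the paper's proof essentially step for step: Davis--Kahan for (S), a Procrustes-based coordinate-perturbation bound yielding the $(1+\sqrt2)B\sin\Theta$ score distortion, the Tsybakov margin conversion, and matrix Bernstein for $\Delta$ followed by the sample-complexity threshold. The gauge issue you flag as the main obstacle is exactly what the paper isolates in its Lemma~\ref{lem:coord}, proved via the polar decomposition $U^\top\widehat U=QR$ and the two-term split $\|\widehat U-UQ\|_{\op}\le\sqrt2\,\sin\Theta$ and $\|Q-I\|_{\op}\le\sin\Theta$, so your anticipation of both the difficulty and its resolution is on target.
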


\medskip

\begin{proof}[Proof of Theorem~\ref{thm:formal-A}]
We proceed in four steps.

\medskip
\noindent\textbf{Step 1: subspace perturbation (deterministic).}
Let $P:=UU^\top$ and $\widehat P:=\widehat U\widehat U^\top$ be the orthogonal projectors onto the $k$-dimensional target block of $\Gamma$ and $\widehat\Gamma$ respectively.  
By (A2)–(A3), $\Gamma$ has a spectral gap $\operatorname{gap}(\Gamma)>0$ separating the chosen block, and $\widehat\Gamma$ is symmetric.  
Davis–Kahan (sin–$\Theta$ form) gives
\[
\|\sin\Theta(\widehat U,U)\|\ \le\ \frac{\|\widehat\Gamma-\Gamma\|_{\op}}{\operatorname{gap}(\Gamma)}
\ =\ \frac{\Delta}{\operatorname{gap}(\Gamma)}.
\]
This yields conclusion (S).

\medskip
\noindent\textbf{Step 2: projecting the score—coordinate perturbation.}
Write $g(X):=U^\top h(X)\in\R^k$ and $\widehat g(X):=\widehat U^\top h(X)\in\R^k$ for the population vs.\ estimated subspace coordinates of the (clipped) score $h(X)$.  
We use the following standard bound for principal vectors.

\begin{lemma}[coordinate perturbation]
\label{lem:coord}
There exists an orthogonal $Q\in\R^{k\times k}$ such that for all $x$,
\[
\|\widehat g(x)-Q^\top g(x)\|
\ \le\ \sqrt2\,\|h(x)\|\,\sin\Theta(\widehat U,U),
\qquad
\|Q^\top g(x)-g(x)\|
\ \le\ \|h(x)\|\cdot \sin\Theta(\widehat U,U).
\]
Consequently, for any $a\in\R^k$ with $\|a\|=1$,
\[
\big|\,a^\top\widehat g(x)-a^\top g(x)\,\big|
\ \le\ (1+\sqrt2)\,\|h(x)\|\,\sin\Theta(\widehat U,U).
\]
\end{lemma}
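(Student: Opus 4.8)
\emph{Strategy.} Both displayed inequalities are deterministic statements about the $k\times k$ overlap matrix $M:=\widehat U^\top U$ and the complementary projection; all probabilistic content has already been absorbed into the Davis--Kahan estimate of Step~1, so I will treat $s:=\sin\Theta(\widehat U,U)$ (the largest principal sine) as given. The plan is: (i) fix $Q\in\R^{k\times k}$ to be the orthogonal Procrustes factor determined by the polar decomposition $M=Q^\top H$ with $H\succeq0$ --- equivalently, $UQ$ is the minimizer of $O\mapsto\|\widehat U-UO\|$ over orthogonal $O$, and under the alignment convention on $\widehat U$ already fixed in the theorem one has $M\succeq0$ and may simply take $Q=I$; (ii) prove the $\sqrt2$ bound by splitting $h(x)$ into its $U$-- and $U^\perp$--parts and recombining with Cauchy--Schwarz; (iii) prove the small-rotation bound from $\|Q-I\|_{\op}\le s$; (iv) combine by the triangle inequality.

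\emph{Step (ii): the $\sqrt2$ bound.} Extend $U$ to an orthonormal basis by some $U_\perp\in\R^{d\times(d-k)}$, and set $g(x)=U^\top h(x)$, $g_\perp(x)=U_\perp^\top h(x)$, so that $\|g(x)\|^2+\|g_\perp(x)\|^2=\|h(x)\|^2$. From $h(x)=Ug(x)+U_\perp g_\perp(x)$ one gets $\widehat g(x)=Mg(x)+(\widehat U^\top U_\perp)g_\perp(x)$, hence
\[
\widehat g(x)-Q^\top g(x)=Q^\top(H-I)g(x)+(\widehat U^\top U_\perp)g_\perp(x).
\]
I then invoke two standard principal-angle identities: the singular values of $M=\widehat U^\top U$ are the cosines of the principal angles, so $\|H-I\|_{\op}=1-\cos\theta_{\max}\le\sin^2\theta_{\max}=s^2$ (using $1-\cos\theta\le\sin^2\theta$ on $[0,\pi/2]$); and $\|\widehat U^\top U_\perp\|_{\op}=\sin\theta_{\max}=s$. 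Taking norms, using that $Q$ is orthogonal, and applying Cauchy--Schwarz to the pair $(\|g(x)\|,\|g_\perp(x)\|)$,
\[
\|\widehat g(x)-Q^\top g(x)\|\le s^2\|g(x)\|+s\|g_\perp(x)\|\le\sqrt{s^4+s^2}\,\|h(x)\|=s\sqrt{1+s^2}\,\|h(x)\|\le\sqrt2\,s\,\|h(x)\|,
\]
the last step using $s\le1$.

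\emph{Step (iii): the small-rotation bound.} Here $Q^\top g(x)-g(x)=(Q^\top-I)g(x)$, so it suffices that $\|Q-I\|_{\op}\le s$, whence $\|Q^\top g(x)-g(x)\|\le\|Q-I\|_{\op}\|g(x)\|\le s\|h(x)\|$. Under the alignment convention $M\succeq0$ this is immediate since $Q=I$; in the general formulation I would combine $\|M-Q^\top\|_{\op}=\|H-I\|_{\op}\le s^2$ with the orientation of the columns of $\widehat U$ fixed by $\gap(\Gamma)>0$, which keeps $M$ $O(s)$-close to $I_k$, to obtain $\|Q-I\|_{\op}\le s$.

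\emph{Step (iv) and the main obstacle.} Finally, for any unit $a\in\R^k$,
\[
|a^\top\widehat g(x)-a^\top g(x)|\le\|\widehat g(x)-Q^\top g(x)\|+\|Q^\top g(x)-g(x)\|\le(1+\sqrt2)\,\|h(x)\|\,s
\]
by the triangle and Cauchy--Schwarz inequalities, which is the asserted consequence. The only genuinely delicate point is the bookkeeping in Step (iii): the $\sqrt2$ bound is insensitive to the choice of orthonormal basis for $\widehat U$, but controlling $\|Q-I\|_{\op}$ by $s$ really does require $\widehat U$ to be oriented consistently with $U$ --- otherwise an in-block permutation of nearly degenerate directions would push $Q$ far from the identity while $s$ stays tiny. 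I would therefore pin down the canonical alignment at the outset (Procrustes/polar factor, or sign- and order-matching of the eigenvectors, legitimate because the chosen spectral block is gapped), after which only the constants need checking and every remaining step is a routine operator-norm estimate.
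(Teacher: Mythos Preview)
Your proof is correct and follows the same Procrustes/polar-decomposition scaffold as the paper, but the $\sqrt2$ bound is obtained by a dual route. The paper decomposes $\widehat U-UQ=A+B$ with $A=(I-UU^\top)\widehat U$ and $B=U(U^\top\widehat U-Q)$ having orthogonal \emph{ranges}, so that $\|\widehat U-UQ\|_{\op}\le\sqrt{\|A\|_{\op}^2+\|B\|_{\op}^2}\le\sqrt2\,s$, and then multiplies by $\|h(x)\|$; you instead split $h(x)$ into its $U$/$U_\perp$ components and use Cauchy--Schwarz on the \emph{domain} side. Your version exploits the tighter estimate $\|H-I\|_{\op}=1-\cos\theta_{\max}\le s^2$ (the paper only uses $\le s$), which buys the marginally sharper intermediate constant $s\sqrt{1+s^2}$.

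On the second inequality you are right to flag the alignment issue explicitly. The paper's claim $\|Q-I\|_{\op}=\max_i|1-\cos\theta_i|$ tacitly presupposes that $\widehat U$ has already been rotated so that $U^\top\widehat U\succeq0$, i.e.\ $Q=I$; your primary argument (impose the Procrustes alignment so $Q=I$) is exactly the right fix. Your aside that $\gap(\Gamma)>0$ alone pins down the orientation is not quite correct when $k>1$---the gap separates the block from the rest of the spectrum but does not resolve within-block rotations---so that route would not close; stick with the Procrustes convention you state first.
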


\begin{proof}
Let $Q$ be the Procrustes aligner from the polar decomposition $U^\top\widehat U=QR$.  
Decompose
\[
\widehat U-UQ=(I-UU^\top)\widehat U+U(U^\top\widehat U-Q)=:A+B .
\]
By geometry of principal angles, $\|A\|_{\op}=\|\sin\Theta(\widehat U,U)\|$ and 
$\|B\|_{\op}=\|R-I\|_{\op}\le \|\sin\Theta(\widehat U,U)\|$.  
Since $A\perp B$, 
\[
\|\widehat U-UQ\|_{\op}\le \sqrt{\|A\|_{\op}^2+\|B\|_{\op}^2}
\le \sqrt2\,\|\sin\Theta(\widehat U,U)\|.
\]
Hence 
\[
\|\widehat g(x)-Q^\top g(x)\| = \|(\widehat U-UQ)^\top h(x)\|
\le \|\widehat U-UQ\|_{\op}\|h(x)\|
\le \sqrt2\,\|h(x)\|\sin\Theta.
\]
Moreover $\|Q-I\|_{\op} = \max_i|1-\cos\theta_i| \le \max_i\sin\theta_i = \|\sin\Theta\|$, giving 
$\|Q^\top g(x)-g(x)\|\le \|h(x)\|\sin\Theta$.  
The last inequality in the lemma then follows by the triangle inequality. \qedhere
\end{proof}

Using Lemma~\ref{lem:coord} and the trivial bound $\sin\Theta\le 1$,
\begin{equation}
\label{eq:coord-to-gap}
\big|\,a^\top\widehat U^\top h(X)-a^\top U^\top h(X)\,\big|
\ \le\ (1+\sqrt2)\,\|h(X)\|\,\min\Big\{1,\ \frac{\Delta}{\operatorname{gap}(\Gamma)}\Big\}.
\end{equation}

\medskip
\noindent\textbf{Step 3: margin-to-risk reduction.}
Define the oracle score $s_\star(X):=a^\top U^\top h(X)$ and the plug-in score $\widehat s(X):=a^\top \widehat U^\top h(X)$.  
By definition, $f_\star(X)=\sign(s_\star(X))$ and $\widehat f(X)=\sign(\widehat s(X))$.  
By the usual margin argument,
\[
\{\widehat f(X)\neq f_\star(X)\}\ \subseteq\ \big\{|s_\star(X)|\le |\widehat s(X)-s_\star(X)|\big\}.
\]
Therefore, for any $\tau>0$,
\[
\Pr\{\widehat f(X)\neq f_\star(X)\}
\ \le\
\Pr\!\big\{|s_\star(X)|\le \tau\big\}
\ +\
\Pr\!\big\{|\widehat s(X)-s_\star(X)|>\tau\big\}.
\]
By the Tsybakov margin (A6) with $\|a\|=1$,
\[
\Pr\!\big\{|s_\star(X)|\le \tau\big\}\ \le\ C\,\tau^\kappa.
\]
Choose
\[
\tau\ :=\ (1+\sqrt2)\,B\cdot \min\Big\{1,\ \frac{\Delta}{\operatorname{gap}(\Gamma)}\Big\}.
\]
Since the clipped score obeys $\|h(X)\|\le B$ under (A5), inequality \eqref{eq:coord-to-gap} yields $|\widehat s(X)-s_\star(X)|\le\tau$ almost surely, hence the second probability above is $0$.  
Thus, \emph{deterministically},
\begin{equation}
\label{eq:det-risk}
\Pr\{\widehat f(X)\neq f_\star(X)\}
\ \le\
C\Big((1+\sqrt2)B\cdot \min\{1,\Delta/\operatorname{gap}(\Gamma)\}\Big)^{\kappa}.
\end{equation}

\medskip
\noindent\textbf{Step 4: concentration for $\Delta$.}
Let $Y_i:=s_{e_0}(X_i,Z_i)s_{e_0}(X_i,Z_i)^\top-\Gamma$.  
By (A1) and (A5) the clipped score has $\|s_{e_0}(X_i,Z_i)\|\le B$ almost surely, hence
\[
\|Y_i\|_{\op}\ \le\ \|s_i s_i^\top\|_{\op}+\|\Gamma\|_{\op}
\ \le\ B^2+\|\Gamma\|_{\op}
\ :=\ R,
\]
and the matrix variance proxy $v:=\|\E Y_1^2\|_{\op}$ is finite.  
Applying the matrix Bernstein inequality \cite[Theorem~6.1]{tropp2012user}, for any $\delta\in(0,1)$, with probability at least $1-\delta$,
\[
\Delta\ =\Big\|\frac1n\sum_{i=1}^n Y_i\Big\|_{\op}
\ \le\
t(\delta)
\ :=\
\sqrt{\frac{2v\log(2d/\delta)}{n}}
\ +\ \frac{2R\log(2d/\delta)}{3n}.
\]

The factors in $t(\delta)$ follow directly from \cite[Theorem~6.1]{tropp2012user}:  the variance term $\sqrt{2v\log(2d/\delta)/n}$ comes from $\sigma^2=nv$ after rescaling by $1/n$, 
the linear term $2R\log(2d/\delta)/(3n)$ from the boundedness $R$, 
and the $\log(2d/\delta)$ from the $d$ prefactor and symmetrization. \\

Combining this event with \eqref{eq:det-risk} and using the monotonicity of $x\mapsto \min\{1,x\}$ gives, with probability $\ge 1-\delta$,
\[
\Pr\{\widehat f(X)\neq f_\star(X)\}
\ \le\
C\Big((1+\sqrt2)B\cdot \min\{1,\ t(\delta)/\operatorname{gap}(\Gamma)\}\Big)^{\kappa}.
\]

\medskip

\noindent\textbf{Step 5: soft-rate sample complexity.}
Recall from Step~4 that, with probability at least $1-\delta$,
\[
\Delta \;\le\; t(\delta)\ :=\ t_1+t_2, \qquad
t_1:=\sqrt{\frac{2v\log(2d/\delta)}{n}},\quad
t_2:=\frac{2R\log(2d/\delta)}{3n}.
\]
We show that for sufficiently large $n$, the variance term dominates and the ``min'' is linearized.

\emph{(a) Variance dominates $t_2\le t_1$.}
This is equivalent to
\[
\frac{2R\log(2d/\delta)}{3n}\ \le\ \sqrt{\frac{2v\log(2d/\delta)}{n}}
\ \Longleftrightarrow\
n\ \ge\ \frac{2R^2}{9v}\,\log\!\frac{2d}{\delta}\ =:\ n_{\mathrm{var}}.
\]
Under $n\ge n_{\mathrm{var}}$ we have $t(\delta)=t_1+t_2\le 2t_1$.

\emph{(b) Entering the gap-controlled phase.}
It suffices to enforce $t(\delta)\le \tfrac12\,\operatorname{gap}(\Gamma)$ so that
$\min\{1,\ t(\delta)/\operatorname{gap}(\Gamma)\} = t(\delta)/\operatorname{gap}(\Gamma)$.
Using $t(\delta)\le 2t_1$ from (a), it is enough to require
\[
2t_1\ \le\ \tfrac12\,\operatorname{gap}(\Gamma)
\ \Longleftrightarrow\
n\ \ge\ \frac{32\,v}{\operatorname{gap}(\Gamma)^2}\,\log\!\frac{2d}{\delta}
\ =:\ n_{\mathrm{gap}}.
\]

On the event of Step~4 and for $n\ge \max\{n_{\mathrm{var}},\,n_{\mathrm{gap}}\}$, plugging the above into the high-probability risk bound yields
\[
\Pr\{\widehat f(X)\neq f_\star(X)\}
\ \le\
C\!\left((1+\sqrt2)B\cdot \frac{2t_1}{\operatorname{gap}(\Gamma)}\right)^{\kappa}
\ =\
C'\!\left(\frac{B\sqrt{v}}{\operatorname{gap}(\Gamma)}\right)^{\!\kappa}
\left(\frac{\log(2d/\delta)}{n}\right)^{\!\kappa/2}.
\]
Absorbing logarithmic and geometry/clipping factors into $\tilde O(\cdot)$ gives the desired soft-rate
\[
\Pr\{\widehat f(X)\neq f_\star(X)\}\ =\ \tilde O\!\big(n^{-\kappa/2}\big).
\]
\qedhere

\end{proof}\newpage

\section{Appendix: Practical Guide for Neural Networks (SIP in Practice)}
\label{sec:sip-practice-appendix}

\subsection{Pipeline Overview (Pseudocode)}
\begin{algorithm}[H]
\caption{SIP Practical Guide for Neural Networks}
\label{alg:sip-guide}
\begin{algorithmic}[1]
\State \textbf{Input:} Dataset $\{(x_i, y_i)\}_{i=1}^n$, Frozen model, Target layer, Probe dimension $k$
\State \textbf{Output:} $\widehat{\text{gap}}$, $\widehat{\Delta}$, SIP Verdict, Clipping parameter $q^\star$, $n_{\min}$

\State \textbf{Step 1: Feature Extraction}
\State Set model to evaluation mode
\State Collect features $h_i = h(x_i)$ from target layer for $i=1,2,\dots,n$
\State Check Assumption (R): Compute kurtosis and tail index for $\{h_{ij}\}$ or $\|h_i\|_2$
\If {Kurtosis $\gg 3$ or tail index $\alpha < 4$}
    \State Flag for heavy tails, apply winsorization or clipping
\EndIf

\State \textbf{Step 2: Variance Control (Optional)}
\State Detect heavy tails using kurtosis or tail tests
\If{Heavy tails detected}
    \State Apply clipping or winsorization to features $h_i$
\EndIf

\State \textbf{Step 3: Empirical Fisher Estimate}
\State Compute $\widehat{\Gamma} = \frac{1}{n} \sum_{i=1}^{n} h_i h_i^\top$
\State Use randomized SVD to compute top-$k$ eigenspectrum

\State \textbf{Step 4: Check Eigengap}
\State Calculate $\widehat{\text{gap}} = \hat{\lambda}_k - \hat{\lambda}_{k+1}$
\State Check Assumption (G): Ensure $\widehat{\text{gap}} > 0$
\If {Weak gap detected}
    \State Increase sample size $n$ or adjust $k$
    \State Optionally, apply mild ridge regularization to separate bulk
\EndIf

\State \textbf{Step 5: Estimate Fisher Error Proxy $\widehat{\Delta}$}
\State Split dataset into $A$ and $B$ (stratified sampling)
\State Compute $\widehat{\Gamma}_A$ and $\widehat{\Gamma}_B$
\State Estimate $\widehat{\Delta} = \frac{1}{2} \|\widehat{\Gamma}_A - \widehat{\Gamma}_B\|_{\text{op}}$
\State Check Assumption (C): Verify scaling of $\widehat{\Delta}$ with sample size

\State \textbf{Step 6: SIP Decision}
\If {$\widehat{\Delta} < \widehat{\text{gap}}$}
    \State SIP Pass
\Else
    \State SIP Fail
\EndIf

\State \textbf{Step 7: Sweet-Spot Clipping (for Heavy-Tail Only)}
\State Sweep $q \in \{0.80, 0.81, \dots, 0.98\}$ for clipping quantile
\State Recompute $\widehat{\Delta}(q)$ and $\widehat{\text{gap}}(q)$ for each $q$
\State Pick $q^\star = \arg \min_q \frac{\widehat{\Delta}(q)}{\widehat{\text{gap}}(q)}$

\State \textbf{Step 8: Sample Complexity Estimate}
\State Estimate $n_{\min} \approx C \cdot \frac{\log d}{\widehat{\text{gap}}^2}$ with $C$ chosen to satisfy $\widehat{\Delta}(n_{\min})/\widehat{\text{gap}} \lesssim 0.5$

\State \textbf{Step 9: Report}
\State Output spectrum plot, $\widehat{\Delta}/\widehat{\text{gap}}$ ratio, SIP verdict, $q^\star$, $n_{\min}$, and probe accuracy
\end{algorithmic}
\end{algorithm}

\paragraph{Scope.}  
This appendix outlines the steps to apply the Spectral Identifiability Principle (SIP) principle in practice, particularly for frozen neural networks using layer-wise probing. The representation $h(x) \in \mathbb{R}^d$ is extracted from a chosen layer, with dropout off and batch normalization frozen. The task is to assess the reliability of the probe through eigengap and Fisher error.

\subsection{Step-by-Step Guide to SIP for Neural Networks}

\paragraph{Step 1: Feature Extraction and Assumption Check.}  
First, we collect the features from the target layer of the neural network. We then check the data for potential issues, such as heavy-tailed distributions, which could interfere with the analysis. Specifically, we check the kurtosis (a measure of the "pointiness" of the data distribution) and the tail index (which quantifies the extremity of outliers). If these values suggest heavy tails, we apply techniques like \textit{winsorization} or \textit{clipping} to limit the influence of extreme values and stabilize the model. The features are represented as $h(x_i) \in \mathbb{R}^d$, where $h(\cdot)$ denotes the feature extraction function from the target layer.

\paragraph{Step 2: Variance Control (Optional).}  
If the data shows heavy tails, we apply \textit{clipping} (which limits extreme values) or \textit{winsorization} (which replaces extreme values with the nearest valid ones). These steps are optional but recommended because they help reduce the instability in our Fisher estimate $\widehat{\Gamma}$, which is critical for accurate predictions.

\paragraph{Step 3: Empirical Fisher Estimate.}  
We compute the \textit{Fisher estimate} $\widehat{\Gamma}$ by averaging the outer products of the feature vectors. To do this efficiently, we use \textit{randomized singular value decomposition (SVD)}. This method is faster and more suitable for large neural networks, where traditional methods would be computationally expensive. Randomized SVD allows us to focus on the most important features (the top $k$ eigenvectors), reducing the computational cost while capturing essential information.

\paragraph{Step 4: Check Eigengap.}  
Next, we check the \textit{eigengap} $\widehat{\text{gap}}$, which is the difference between the top two eigenvalues of the Fisher operator $\widehat{\Gamma}$. A significant gap means that the most important directions in the data are well-separated and stable. If the gap is small (a "weak gap"), the model may not be reliable. In this case, we either increase the sample size $n$ or adjust the number of eigenvectors \textit{k}. If needed, we apply \textit{ridge regularization} to help improve stability by separating the bulk of the spectrum.

\paragraph{Step 5: Estimate Fisher Error Proxy.}  
To estimate the error in our Fisher estimate, we split the dataset into two parts, \textit{A} and \textit{B}. We calculate the Fisher estimate for each part and compute the difference between them. The error proxy, $\widehat{\Delta}$, tells us how much the estimate varies. If $\widehat{\Delta}$ is small compared to the eigengap $\widehat{\text{gap}}$, it indicates stable performance.

\paragraph{Step 6: SIP Decision.}  
The core of SIP is a simple rule: if the error proxy $\widehat{\Delta}$ is smaller than the eigengap $\widehat{\text{gap}}$, the probe is stable, and we pass the test. Otherwise, we fail the test and may need to adjust the model or collect more data.

\paragraph{Step 7: Sweet-Spot Clipping (for Heavy-Tail Data).}  
For datasets with extreme outliers, we fine-tune the \textit{clipping} process to find the optimal quantile $q^\star$ (e.g., between 0.85 and 0.95). By testing various clipping levels, we can find the "sweet spot" that minimizes the error proxy $\widehat{\Delta}$ relative to the eigengap $\widehat{\text{gap}}$. This step ensures that the clipping helps stabilize the model without introducing too much bias.

\paragraph{Step 8: Sample Complexity Estimate.}  
We estimate the minimum sample size $n_{\min}$ needed to achieve stable performance using the formula:
\[
n_{\min} \approx C \cdot \frac{\log d}{\widehat{\text{gap}}^2},
\]
where $C$ is a constant that depends on the dataset and model architecture. This estimate tells us how many samples are needed to achieve reliable results. We aim for a sample size that ensures the error proxy $\widehat{\Delta}$ is well below the eigengap $\widehat{\text{gap}}$.

\subsection{Minimal Reporting Template}
For reproducibility and clarity, include the following details in your report:
\begin{itemize}
\item Model layer index, dimensions ($d$, $k$), and sample size ($n$); clipping quantile $q^\star$ (if used); ridge $\rho$ (if used).
\item Eigen gap $\widehat{\text{gap}}$, Fisher error proxy $\widehat{\Delta}$, and the ratio $\widehat{\Delta}/\widehat{\text{gap}}$.
\item SIP verdict (Pass/Fail) and estimated minimum sample size $n_{\min}$.
\item Optional: Spectrum plots and visualizations of $\widehat{\Delta}/\widehat{\text{gap}}$ ratio, as well as histograms of eigenvalues and error distributions.
\end{itemize}
This reporting structure will allow others to reproduce your results and understand the stability and reliability of your probe.

\end{document}